\documentclass{article} 
\usepackage{iclr2026_conference,times}

\usepackage{amsmath,amsfonts,bm}

\def\eqref#1{equation~\ref{#1}}

\def\1{\bm{1}}

\DeclareMathAlphabet{\mathsfit}{\encodingdefault}{\sfdefault}{m}{sl}
\SetMathAlphabet{\mathsfit}{bold}{\encodingdefault}{\sfdefault}{bx}{n}

\usepackage{url}
\usepackage{graphicx}
\usepackage{amsmath}
\newtheorem{definition}{Definition}
\newtheorem{thm}{Theorem}[section]
\newtheorem{cor}{Corollary}[section]

\usepackage{algorithm}
\usepackage{inconsolata}
\usepackage{algorithmicx}
\usepackage{minted}
\usepackage{xcolor}
\usepackage{listings}
\usepackage{booktabs}
\usepackage{hyperref}
\usepackage{graphicx}
\usepackage[utf8]{inputenc} 
\usepackage[T1]{fontenc}    
\usepackage{hyperref}       
\usepackage{url}            
\usepackage{booktabs}       
\usepackage{amsfonts}       
\usepackage{nicefrac}       
\usepackage{microtype}      
\usepackage{xcolor}         
\usepackage{bbding}
\usepackage{hyperref}

\title{Hierarchical Adaptive Eviction for KV Cache Management in Multimodal Language Models}

\author{
Xindian Ma, Yidi Lu, Peng Zhang~\thanks{Corresponding Author}, Jing Zhang  \\
College of Intelligence and Computing, Tianjin University, Tianjin, China \\
\texttt{\{xindianma, luyidi2001, pzhang zhang\_jing\}@tju.edu.cn}
}

\iclrfinalcopy 
\begin{document}

\maketitle

\begin{abstract}
The integration of visual information into Large Language Models (LLMs) has enabled Multimodal LLMs (MLLMs), but the quadratic memory and computational costs of Transformer architectures remain a bottleneck. Existing KV cache eviction strategies fail to address the heterogeneous attention distributions between visual and text tokens, leading to suboptimal efficiency or degraded performance.
In this paper, we propose Hierarchical Adaptive Eviction (HAE), a KV cache eviction framework that optimizes text-visual token interaction in MLLMs by implementing Dual-Attention Pruning during pre-filling (leveraging visual token sparsity and attention variance) and a Dynamic Decoding Eviction Strategy (inspired by OS Recycle Bins) during decoding. 
HAE minimizes KV cache usage across layers, reduces computational overhead via index broadcasting, and theoretically ensures superior information integrity and lower error bounds compared to greedy strategies, enhancing efficiency in both comprehension and generation tasks. 
Empirically, HAE reduces KV-Cache memory by 41\% with minimal accuracy loss (0.3\% drop) in image understanding tasks and accelerates story generation inference by 1.5× while maintaining output quality on Phi3.5-Vision-Instruct model.
The source code and related materials are publicly available in the GitHub repository at \href{https://github.com/maxindian/HAE_for_VLMs}{\textcolor{blue}{https://github.com/maxindian/HAE\_for\_VLMs}}
\end{abstract}

\section{Introduction}

Following the notable progress in Large Language Models (LLMs)~\cite{openai2024gpt4technicalreport, touvron2023llamaopenefficientfoundation,deepseekai2024deepseekv3technicalreport}, researchers have begun to investigate the integration of visual information within this framework, culminating in the creation of Multimodal Large Language Models (MLLMs)~\cite{bai2023qwenvlversatilevisionlanguagemodel, NEURIPS2023_6dcf277e, abdin2024phi3technicalreporthighly}. These MLLMs combine visual tokens with textual tokens to enable multimodal interactions within the LLM framework.
For instance, Phi-3.5-Vision-Instruct~\cite{abdin2024phi3technicalreporthighly} integrates a trainable vision encoder–projector pathway that maps image features and text tokens into a shared semantic space, enabling cross-modal autoregressive joint modeling on top of a large language model backbone.
Despite the significant advancements in Multi-Modal Large Language Models (MLLMs), the influx of visual and text tokens inevitably incurs substantial memory and computational overhead. This issue stems from the Transformer architecture design, where computational complexity escalates quadratically with token length~\cite{10.5555/3295222.3295349}. To mitigate recomputation, the challenge associated with the size of the Key-Value (KV) cache, which stores intermediate attention during generation, has become increasingly critical~\cite{pope2022efficientlyscalingtransformerinference}.

In terms of removing KV cache redundancy, some current methods~\cite{DBLP:journals/corr/abs-2407-02392, yao2024decodecouplingtokencompression, liu2024multi, zhang2024sparsevlm, NEURIPS2023_6ceefa7b, chen2024nacl} mainly focus on the eviction of single-modal tokens and are unable to effectively manage the balance between textual and visual information.
For example, SpareVLM~\cite{zhang2024sparsevlm} and MustDrop~\cite{liu2024multistagevisiontokendropping} focus on compressing visual tokens to shorten input lengths. However, these strategies mainly tackle redundancy within visual tokens without addressing the interdependencies between text and visual tokens during multimodal long text generation~\cite{huang2025dynamicllava}. 
H2O~\cite{NEURIPS2023_6ceefa7b} and NACL~\cite{chen2024nacl} remove low-frequency tokens using accumulated attention scores to improve text fluency, but only handle single-modal text tokens.
Additionally, these eviction methods require greedy eviction decision calculations at each layer, which limits the model's reasoning efficiency in multimodal understanding tasks.
Therefore, there is a pressing need for a robust approach to manage both visual and textual KV redundancy to improve reasoning efficiency and the quality of generation.

To address this issue, we first conducted experiments to observe the differences in attention distribution between visual tokens and textual tokens during the reasoning process in MLLMs. 
The observation demonstrates that there is a significant difference in the attention distribution variance between visual tokens and textual tokens. Therefore, redundancy in visual and textual KV cannot be evicted using a uniform eviction strategy. Inspired by this observation, we propose a two-stage eviction framework, named Hierarchical Adaptive Eviction (HAE). The framework (as shown in Figure~\ref{figur1:method}) includes two techniques: \textit{Dual-Attention KV Pruning} (DAP) during the pre-filling stage for the eviction of visual KV redundancy, and \textit{Dynamic Decoding Eviction Strategy} (DDES) during the decoding stage for context-aware eviction optimization in visual-textual mixed contexts. 

During the pre-filling stage, we use the DAP method to evict redundant visual tokens in the first layer, and then broadcast the indices of the redundant visual tokens computed from the first layer to the other layers of the model. This strategy is inspired by the experimental observations of the sparsity of attention scores for visual and textual tokens across different layers. 
The observation results show that the sparsity rate of visual tokens in the first layer is higher than that of textual tokens.
Therefore, we specifically evict redundant visual tokens in the first layer to alleviate the pressure on KV storage. 
Additionally, the sparsity rate of visual information in the other layers of the model is higher than that in the first layer, so we consider broadcasting the indices of redundant tokens to the other layers for eviction. 
This mechanism has two advantages: a storage advantage, as the KV corresponding to redundant visual tokens is uniformly reduced throughout the network, and a computational advantage, as there is no need for layer-by-layer eviction decision calculations.

During the decoding stage, DDES is used to evict redundant information from the model's KV cache. This strategy is similar to the H2O method, as it determines which KVs need to be evicted by calculating cumulative attention scores. However, unlike the greedy eviction approach (where eviction occurs once per decoding step), DDES uses a recycling bin to hold KV indices associated with tokens that have the lowest cumulative attention scores. Once this bucket is full, the KVs are evicted all at once, similar to a Recycle Bin in operating systems. 
Compared to greedy eviction that permanently discards KV pairs, DDES enhances efficiency by dynamically retaining potentially relevant KV states. This not only avoids costly recomputations but also ensures the maintenance of model accuracy.

\begin{figure}[t]
\centering
\small
\includegraphics[width=1.0\textwidth]{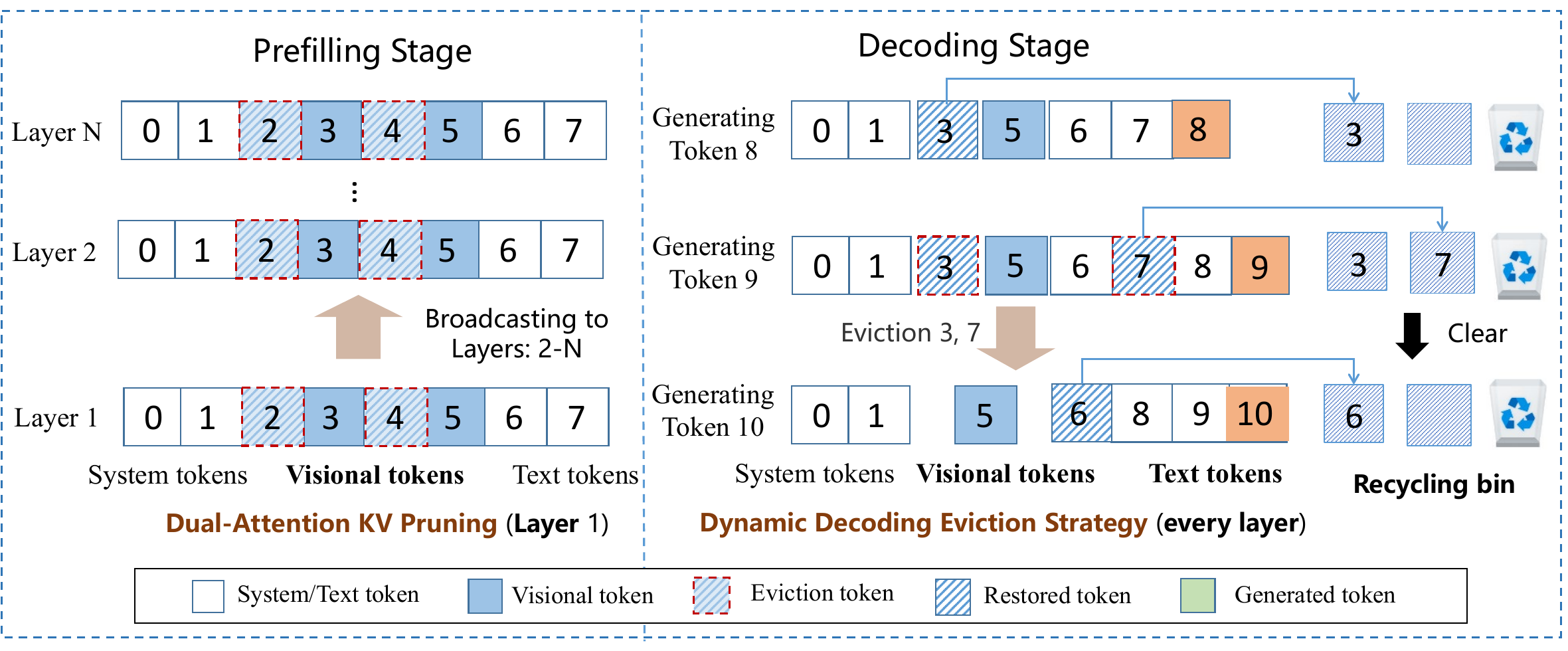}
\caption{The overall framework of Hierarchical Adaptive Eviction (HAE). In the pre-filling stage, the \textit{Dual-Attention KV Pruning} (DAP) detects redundant visual token indices 2 and 4 in the first layer, and broadcasts this information to perform corresponding eviction across other layers. In the decoding stage, the \textit{Dynamic Decoding Eviction Strategy} (DDES) handles KV eviction. For example, when generating token 9, index 3 is marked but not removed; when generating token 10, index 7 is marked. Once the recycling bin is full, indices 3 and 7 are evicted, and then the indices are reset.} 
\label{figur1:method}
\end{figure}

The DAP method can ensure the integrity of modeling information within an allowable error range. We theoretically validated this by establishing an inequality relationship between the number of evicted KVs and the error, as demonstrated in Theorem 2.1. We further obtained through theoretical analysis that the DDES method has a smaller eviction error during the decoding process compared to the greedy eviction method, as shown in Corollary 2.1.

We conducted a thorough evaluation of the open-source MLLMs (i.e., LLaVA-1.5-7B and Phi-3.5-Vision) through various experiments, including image-based question answering tasks (GQA, ScienceQA, etc.) and long-form story generation (Seed-Story~\cite{yang2024seedstory}). The findings highlight the effectiveness of the HAE method. In image-based question answering tasks, HAE reduced KV cache memory usage by 47\% while preserving 97\% of the baseline model's quality. In image-based story generation tasks, it demonstrated enhanced quality and a 1.5x increase in generation speed compared to the baseline model.
In summary, the contributions of this work are three folds:
\begin{itemize}
\item A systematic study of cross-modal KV redundancy in MLLMs has been conducted, and a new KV eviction framework, HAE, has been proposed.
\item A theoretical framework demonstrates the advantages of preserving KV information integrity, achieving tighter error bounds compared to greedy methods.
\item Experiments on multimodal understanding and generation tasks demonstrate our method's superior accuracy, speed, and cache efficiency.
\end{itemize}

\section{Methodology}
This section begins with inference experiments to explore the differences in how attention is distributed between visual and textual information, focusing on their sparsity and distribution variance (Refer to Section~\ref{observation}). These observations inspired us to design a two-stage KV eviction framework, Hierarchical Adaptive Eviction (HAE) (See Section~\ref{method:framework}). 
Lastly, a theoretical analysis of HAE is presented in Section~\ref{Analysis:benefits}.

\subsection{Observation}
\label{observation}

\subsubsection{Differences in cumulative distribution}
\begin{figure}[ht]
    \begin{center}
    \includegraphics[width=0.90\textwidth]{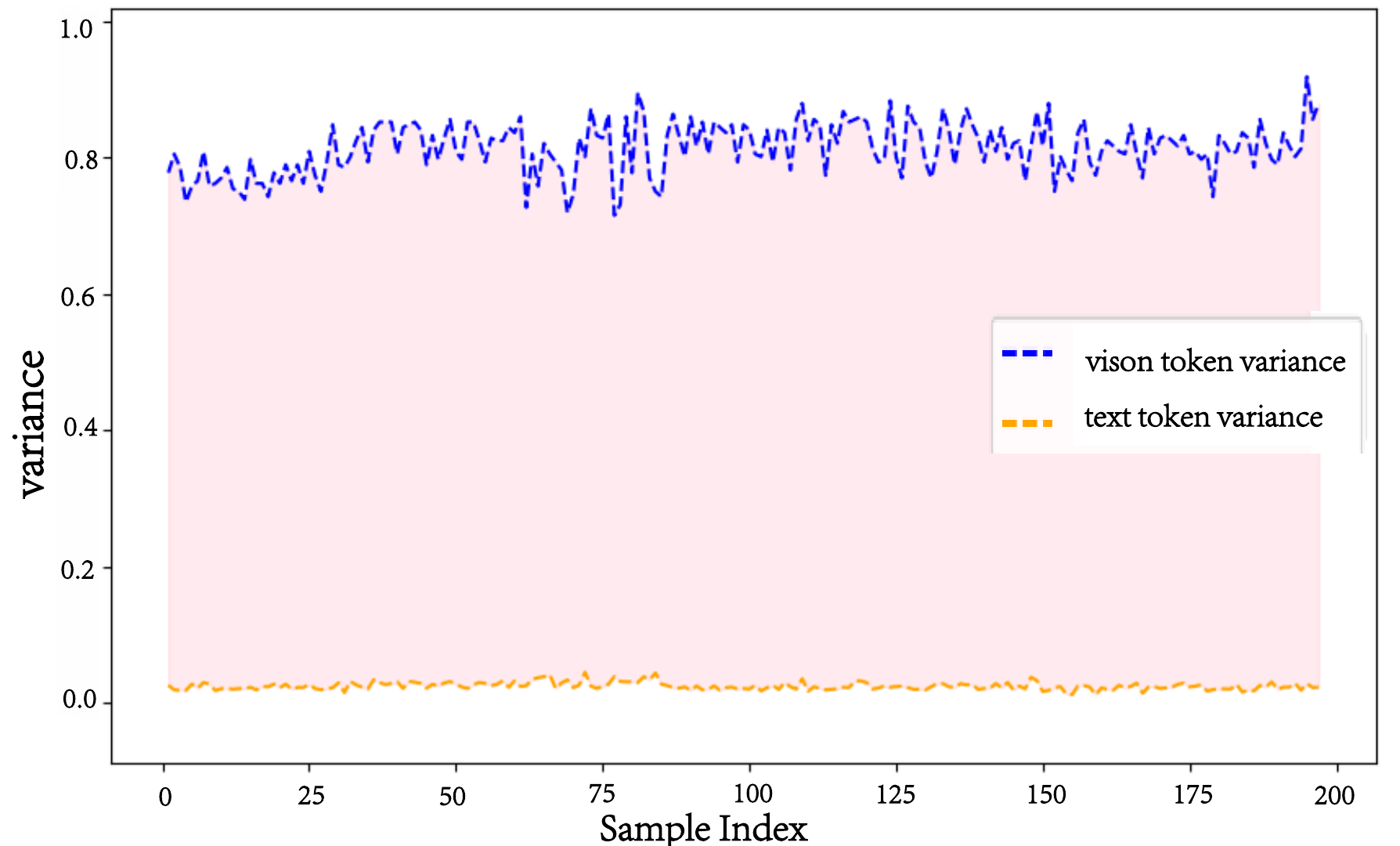}
    \caption{Comparison of cumulative attention score variance between visual tokens and text tokens in the first transformer layer of Phi3.5-vision-Instruction.}
    \label{fig:enter-label}
    \end{center}
    \vspace{-0.5cm}
\end{figure}

The previous works~\cite{liu2024multistagevisiontokendropping,chen2024image} demonstrated the potential to develop a compact KV-cache size without compromising the efficacy of MLLMs. However, these methods focus solely on evicting one type of KV and overlook the differences in distribution between textual and visual KVs. Creating a redundant KV eviction strategy that considers both modalities will enhance the model's efficiency in multimodal understanding and long text generation tasks. To examine the attention distribution differences between the two modalities, we performed a variance analysis on the cumulative attention scores of visual and textual tokens across 200 samples, as illustrated in Figure~\ref{fig:enter-label}. 

\textbf{Observation}. As shown in Figure~\ref{fig:enter-label}, there are significant differences in the distribution of cumulative attention scores for different types of tokens within the attention module. This inspires us to avoid using a uniform eviction decision pattern during KV eviction; instead, we need to employ different KV eviction strategies at different stages.

\textbf{Analysis}. Firstly, during the pre-filling stage, it is necessary to determine the types of redundant KV to prioritize for eviction by analyzing the sparsity between output tokens. Additionally, the sparse relationships between visual and textual attention at different layers need to be considered to construct a dynamic redundant KV eviction strategy.

\begin{figure}[t]
    \begin{center}
    \includegraphics[width=0.95\textwidth]{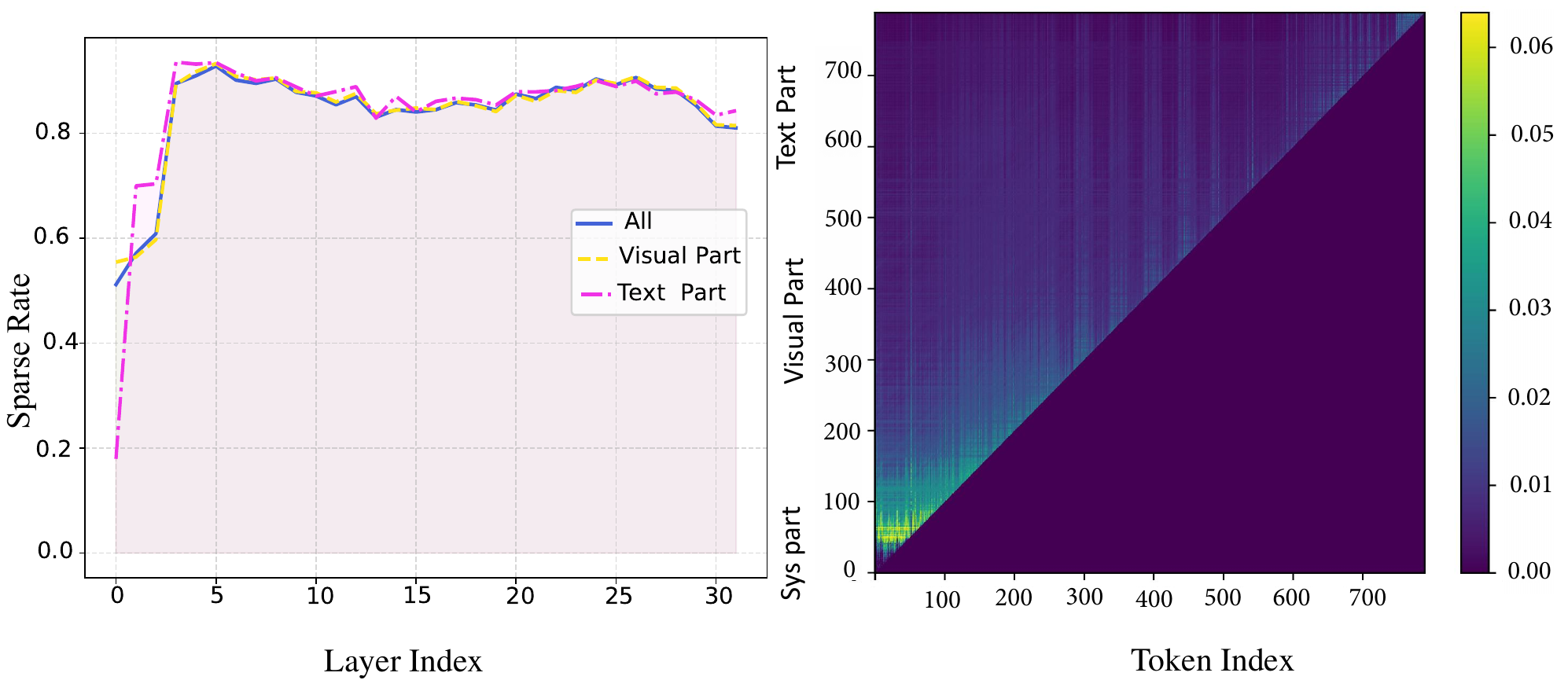}
    \caption{Left: Sparsity rates across layers in Phi3.5-vision-Instruct, highlighting overall, visual, and text component sparsity. Right: Attention matrix for layer 1, featuring annotated token positions that connect visual and text elements to their attention patterns.}
    \label{fig:enter-label-1}
    \end{center}
    \vspace{-0.8cm}
\end{figure}

\subsubsection{Sparse Rate for Different Layers}
Inspired by previous literature~\cite{NEURIPS2023_6ceefa7b}, we perform zero-shot inference using the pre-trained Phi3.5-vision-Instruction model on the MMMU test set. We plot the attention blocks for each layer and visualize the sparsity rates of the attention matrices for 50 randomly sampled samples, including overall sparsity, sparsity for the visual component, and sparsity for the text component. This result is presented in Figure~\ref{fig:enter-label-1}.
The code for calculating the sparsity rates can be found in Appendix~\ref{Appendix:plot-code}. 

\textbf{Observation}. The results in Figure~\ref{fig:enter-label-1} show that although Phi3.5-Vision-Instruction has undergone extensive training, the resulting attention score matrix is still highly sparse. This sparsity indicates that accessing previous key and value embeddings is not always necessary for generating the next token. By observing the visual and textual parts separately, it can be found that the sparsity rate of the textual part is relatively low in the first and second layers, while the sparsity rate of the visual part is relatively high, which is similar to the global sparsity trend.

\textbf{Analysis.} Based on a unified eviction strategy, it is not possible to account for the differences in sparsity between text and visual tokens. Therefore, we can consider prioritizing the eviction of redundant visual tokens in the first layer to reduce KV storage pressure. Since the sparsity in other layers is higher than that in the first layer, we can consider reusing the eviction decisions obtained from the first layer in the other layers.

\subsection{Hierarchical Adaptive Eviction}
\label{method:framework}

In this section, we introduce a two-stage redundant KV eviction framework, as shown in Figure~\ref{figur1:method}. Specifically, during the pre-filling stage, the framework uses Double-Attention KV Pruning (DAP) to detect redundant visual tokens in the first layer of MLLMs. In the decoding stage, mixed eviction is performed in higher sparsity layers using the Dynamic Decoding Eviction Strategy (DDES).

\subsubsection{Dual-Attention Pruning}
\label{problem:defintion}
To better formalize the description of the DAP method, we first define the eviction process for this stage as follows.
\begin{definition} (Eviction Policy, pre-filling). 
Let $S_{0}$ denote the initial ordered set, which consists of the ordered set $V$ of visual tokens and the ordered set $T$ of text tokens. 
Let $S_1$ denote the target set. 
During the pre-filling stage, the eviction policy $g$:$S_{0}$ $\rightarrow$ $S_{1}$ meets the following conditions: (1) $|S_0| = n$
  (the KV cache size for the inputs); (2) $|S_0 \setminus S_{1}| < c$
  (at most $c$ KV pairs corresponding to the visual tokens are evicted). 
\label{prefillingEP}
\end{definition}

To determine the redundant KV index set, we compute the relevance between visual tokens as well as between each visual token $V_j$ and the global text token. At this stage, the attention scores $A_{i,j}$ between visual token $V_j$ and text token $T_i$ have already been calculated. Inspired by this, the attention mechanism is used to assist DAP in identifying unimportant visual tokens, thereby reducing the storage requirements of the KV cache and improving the model's inference efficiency.
We can efficiently obtain the global attention $A_j$ between vision token $V_j$ and all text tokens as follows:
\begin{equation}
A_j = \sum_{i=1}^{|T|} A_{i,j}
\end{equation}
where $|T|$ is the number of input text tokens.

Then, we can dynamically ascertain the set $V$ of vision tokens, that exhibit the least correlation with the textual content by the threshold $r$:
\begin{equation}
    V^{p} = \{V_j|A_j \geq  r \sum_{j=1}^{|V|} A_j \}
\end{equation}
where $|V|$ is the number of input vision tokens. It is worth noting that the number of visual tokens in set $V^{p}$ varies dynamically across different test samples rather than being fixed. Therefore, the set $C$ of evicted visual tokens can be defined as $C = V \setminus V^p$.

However, some visual tokens may have a high similarity with individual text tokens, a phenomenon observed in many related works, such as MustDrop~\cite{liu2024multistagevisiontokendropping}. Therefore, relying solely on this global attention method to filter out insignificant visual tokens might inadvertently exclude tokens essential for sustaining performance but are overshadowed by the overall textual context.
To address the issue, we also assess the correlation between each text token and visual token $V_j$. The calculation for individual estimation is as follows:
\begin{equation}
\label{attention:alpha}
\mathop{max}\limits_{j} A_{j,i} < \alpha
\end{equation}
This means that an evicted visual token must also meet the requirement that its maximum attention score among all text tokens is less than the domain value $\alpha$.

\subsubsection{Dynamic Decoding Eviction Strategy}
\label{problem:decoding}
In this section, we introduce the DDES eviction strategy by formally defining the eviction steps and the conditions that must be met (see Definition \ref{decoding:ep}). This will aid in better understanding and analyzing the advantages of the HAE architecture.

\begin{definition} (Eviction Policy, decoding). 
Let $S_1$ = $V^{p} \cup T$ be the ordered set after eviction in the pre-filling stage, where $V^{p} \subseteq V$ is the set of retained visual tokens and $|V^{p}|=|V|-|C|$. The eviction policy during the decoding stage $h: S_1 \rightarrow S_2$ satisfies:
(1) \textbf{Dynamic Cache Constraint}: $l$ $\leq$ $|S_2|$ < $l + D$, where $l=|S_1|$ and d is the buffer threshold for decoding steps. (2) \textbf{Recursive Eviction Condition}: For every $k$ new tokens generated $(1< k \leq D)$, an eviction operation is triggered. (3) \textbf{Cross-Modality importance Criterion}: Remove the k tokens with the lowest cumulative attention scores from the set, i.e., 
\begin{equation}
S_2 = S_1 \setminus arg ~ \mathop{min}\limits_{C \subset{S_1},|C|= k} \sum_{C_j \in C} Sc(C_j)
\label{relue}
\end{equation}
where the scoring function $Sc(c_j)$ is defined as:
\begin{equation}
Sc(C_j) =\sum_{t=1}^{k}  \sigma_j \cdot softmax(\frac{Q_t K_{:{j+t}}^{T}}{\sqrt{d}}) + \beta(C_j)
\label{score}
\end{equation}
where $\sigma_j$ is a selection function, which selects the $j_{th}$ value from the attention distribution scores. $\beta(C_j)$ represents the cumulative attention score of $C_j$ before the current decoding step. $t$ represents the sequence of tokens generated prior to the current decoding step. 
\label{decoding:ep}
\end{definition}
Our eviction method presents several enhancements.  It raises the buffer threshold by $d$ steps compared to the H2O~\cite{NEURIPS2023_6ceefa7b} greedy eviction method, offering greater flexibility in token retention. Additionally, it balances the weights of various modal information during the eviction process, unlike existing methods~\cite{zhang2024sparsevlm,liu2024multistagevisiontokendropping} that primarily target visual token eviction. 


\subsection{Theoretical Analysis}
\label{Analysis:benefits}
To analyze the effectiveness of our proposed HAE framework, we analyzed cache information integrity based on Theorem~\ref{thm:1} and the error upper bound based on Corollary~\ref{cor1}. The results of these two aspect analyses help verify the effectiveness of the HAE framework.
\begin{thm} (Cache Information Integrity).
If the eviction threshold $k$ meets
$$ k \leq \frac{log(\epsilon/Attn_{max})}{log(1-\lambda)},$$ then the total loss of the evicted tokens is $\sum_{j=1}^{c} \epsilon_j < \epsilon$, where $Attn_{max} = \mathop{min}\{\mathop{max}\limits_{j} A_{j,i}|j\in[1,2,...,c], i\in[1,2,...,|T|]\}$, $\lambda$ is the decay rate, and $\epsilon$ is the allowable loss error.
\label{thm:1}
\end{thm}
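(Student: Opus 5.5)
The plan is to split the argument into a purely algebraic step and a modelling step, and to be explicit about which hypothesis the modelling step needs.

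\textbf{Step 1: unpack the hypothesis.} Since $0<\lambda<1$, we have $\log(1-\lambda)<0$. Assume the nontrivial regime $\epsilon<Attn_{max}$, so that the bound on $k$ is positive. Multiplying $k \leq \log(\epsilon/Attn_{max})/\log(1-\lambda)$ by the negative number $\log(1-\lambda)$ reverses the inequality and gives $k\log(1-\lambda)\geq \log(\epsilon/Attn_{max})$. Exponentiating,
\begin{equation*}
Attn_{max}(1-\lambda)^{k} \;\geq\; \epsilon .
\end{equation*}
So the hypothesis of Theorem~\ref{thm:1} says exactly this: after $k$ decay steps, the guaranteed attention floor $Attn_{max}(1-\lambda)^k$ of the reference tokens is still at least $\epsilon$. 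Here $Attn_{max}$ is the smallest per-token maximum attention in the relevant index range. The proof must therefore bound the total loss \emph{below} this surviving floor, not below a decaying upper envelope of the evicted tokens.

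\textbf{Step 2: relate the loss to the surviving floor.} I would model the attention of a token as decaying by at most a factor $(1-\lambda)$ per decoding step between evictions, in line with the cumulative score $\beta(C_j)$ in equation~\ref{score}. Under this model, any token whose maximum attention is at least $Attn_{max}$ keeps, after at most $k$ steps, attention at least $Attn_{max}(1-\lambda)^k \geq \epsilon$.

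Two selection rules control the evicted tokens. The DAP rule in equation~\ref{attention:alpha} only evicts tokens whose maximum attention is below $\alpha$. The argmin rule in equation~\ref{relue} only evicts the $k$ tokens of smallest score. The key claim I would aim to establish is
\begin{equation*}
\sum_{j=1}^{c}\epsilon_j \;<\; Attn_{max}(1-\lambda)^{k}.
\end{equation*}
In words: the aggregate attention mass that the evicted tokens would have contributed is strictly smaller than the decayed mass of a single reference token that survives. Chaining this claim with Step~1 yields $\sum_j \epsilon_j<\epsilon$.

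\textbf{Step 3: prove the key claim (main obstacle).} This step is where the real work lies. The per-token comparison (each evicted score is below every retained score) does not by itself bound the \emph{sum} of the $c$ evicted losses.

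My first attempt would use softmax normalisation: each attention row sums to one. The evicted set is the argmin set and is chosen before the surviving token's mass is counted, so the combined evicted mass in a row is at most the complement of the retained mass. I would then try to show this complement is dominated by the decayed floor, by choosing $\alpha\leq Attn_{max}(1-\lambda)^k/c$ in the DAP criterion.

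If the normalisation argument does not close, I would state this calibration of $\alpha$ as an explicit standing assumption linking the two thresholds. The stated implication then holds under it.

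The remaining bookkeeping is routine: checking that the strict inequality survives, and handling the degenerate case $\epsilon\geq Attn_{max}$. In that case the right-hand side of the condition on $k$ is nonpositive, and the conclusion follows directly from $\sum_j\epsilon_j<\alpha c\leq Attn_{max}\leq\epsilon$.
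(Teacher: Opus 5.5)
\textbf{There is a genuine gap: the chaining in Step 2 goes the wrong way, and the statement as written cannot be repaired by filling in Step 3.}

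Your Step 1 is correct, and it is the same computation as in the paper's appendix: the hypothesis on $k$ is equivalent to $Attn_{max}(1-\lambda)^k \ge \epsilon$. The argument breaks when you chain this with Step 2. Your key claim $\sum_{j=1}^{c}\epsilon_j < Attn_{max}(1-\lambda)^k$ and Step 1's $\epsilon \le Attn_{max}(1-\lambda)^k$ both put their left-hand sides below the same quantity. Together they say nothing about $\sum_j \epsilon_j$ versus $\epsilon$, so even a complete proof of Step 3 would not give the theorem.

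You noticed in Step 1 that a decaying envelope cannot bound the loss. Relabelling $Attn_{max}(1-\lambda)^k$ as a surviving floor does not change which side of $\epsilon$ it lies on.

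Your fallback calibration fails for the same reason. It gives $\sum_j\epsilon_j < c\alpha \le Attn_{max}(1-\lambda)^k$, an upper bound that is itself at least $\epsilon$. To conclude you would need $c\alpha \le \epsilon$ directly, and then the hypothesis on $k$ is never used. The calibration is also unsatisfiable under the statement's definition. $Attn_{max}$ is formed from the evicted indices $j\in[1,c]$, and by the DAP rule all their attention scores are below $\alpha$. Hence $Attn_{max}<\alpha$, and $\alpha \le Attn_{max}(1-\lambda)^k/c$ would force $Attn_{max} < Attn_{max}$. The same objection applies to the inequality $\alpha c \le Attn_{max}$ in your degenerate case.

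The obstruction lies in the statement itself, not in your bookkeeping. The hypothesis is an \emph{upper} bound on $k$, so it makes $Attn_{max}(1-\lambda)^k$ large. Bounding a loss that shrinks the longer a token stays in the cache needs that quantity to be small, which is the reversed condition $k \ge \log(\epsilon/Attn_{max})/\log(1-\lambda)$.

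In the paper's own decay model, the worst-case evicted token loses $\epsilon_{max} = Attn_{max}(1-\lambda)^k$. The stated hypothesis forces this to be at least $\epsilon$, so the conclusion $\sum_j \epsilon_j < \epsilon$ fails there.

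The paper's appendix proof does not supply a missing ingredient. It derives the same inequality $\epsilon \le \epsilon_{max}$ that you obtain in Step 1 and presents it as the condition keeping the loss below $\epsilon$, which reads the implication backwards.

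A provable version would make three changes:
\begin{itemize}
\item flip the inequality on $k$;
\item take $Attn_{max}$ to be an upper bound on the evicted tokens' initial scores, rather than the min-of-max in the statement;
\item replace $\epsilon$ by $\epsilon/c$ in the condition, so that a per-token bound controls the sum over all $c$ evicted tokens.
\end{itemize}
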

\begin{cor} (Error Upper Bound)
Let the loss of attention score for each exclusion be $\epsilon_i$
 . Then the total loss satisfies
\begin{equation}
\sum_{i=1}^{d} \epsilon_i \leq \sum_{j \in Low_d(S_1)} Sc(C_j),
\end{equation}
where $Low_d(S_1)$
  denotes the $d$ elements with the lowest scores.
\label{cor1}
\end{cor}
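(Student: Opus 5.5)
The plan is to reduce Corollary~\ref{cor1} to the optimality of the DDES selection rule in equation~\ref{relue}. The idea is to identify the ``loss of attention score'' $\epsilon_i$ with the score mass that is removed from the cache when the $i$-th token is evicted. The argument then has three steps.

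\textbf{Step 1: each loss is bounded by a score.} Under the dynamic cache constraint of Definition~\ref{decoding:ep}, evictions occur only when the recycle bin is full, so across one bin cycle at most $d$ tokens are removed. I will take $\epsilon_i$ to be the attention mass carried by the $i$-th evicted token $C_{j_i}$ up to the moment of eviction. By equation~\ref{score}, this mass is the accumulated prior score $\beta(C_{j_i})$ plus the softmax contributions $\sigma_{j_i}\cdot \mathrm{softmax}(\cdot)$ collected while the token sat in the bin. So the first task is to show
\[
\epsilon_i \le Sc(C_{j_i}).
\]
This holds because softmax entries are nonnegative, and because any attention the token would have received after eviction is not yet incurred at decision time. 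I expect this step to be essentially definitional, but I will state the identification explicitly, since the paper never defines $\epsilon_i$ formally.

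\textbf{Step 2: the evicted set is the minimum-score set.} Let $C^\ast=\{C_{j_1},\dots,C_{j_d}\}$ be the evicted set. By the cross-modality importance criterion, $C^\ast$ is the argmin in equation~\ref{relue} over subsets of size $d$. The minimizer of a sum of per-element scores over subsets of fixed size is exactly the set of the $d$ lowest-scored elements, so $C^\ast = Low_d(S_1)$. This follows from a one-line exchange argument: swapping any non-minimal element for a lower-scored one does not increase the sum.

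\textbf{Step 3: sum and compare.} Summing the inequality from Step~1 over $i=1,\dots,d$ and substituting $C^\ast = Low_d(S_1)$ gives
\[
\sum_{i=1}^{d}\epsilon_i \le \sum_{i=1}^{d} Sc(C_{j_i}) = \sum_{j\in Low_d(S_1)} Sc(C_j).
\]
To support the comparison with greedy eviction promised in the introduction, I will also remark on H2O. A greedy scheme evicts one token per step using only $\beta$ at that step, so its eviction set is generally not the joint minimizer over the buffered window, and its sum of scores is at least the right-hand side.

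\textbf{Main obstacle.} The hard part is Step~1, because the scores change over time. A token's $Sc$ is measured at the eviction step, while it keeps accruing attention during the $k\le D$ buffered steps. I will argue with monotonicity: $Sc$ is nondecreasing in $t$, since every summand is nonnegative. Therefore evaluating all scores at the moment the bin flushes dominates the loss each token had accumulated when it was marked, which makes the bound consistent across the whole bin cycle.
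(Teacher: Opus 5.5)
Your proof is correct relative to the formal rule in Definition~\ref{decoding:ep}, but it is organized differently from the paper's. Both arguments rest on the same observation: if a token's loss is identified with its score, removing the $d$ lowest-scored elements costs exactly $\sum_{j\in Low_d(S_1)}Sc(C_j)$. The paper, however, applies this to \emph{greedy} eviction, both one-shot and stepwise. The stepwise case uses an induction that silently freezes the scores between steps, and it yields equality. The paper then obtains the inequality for HAE only from the informal claim that HAE, having no urgency to evict, loses less per token than greedy. You instead apply the batch argmin of equation~\ref{relue} to DDES itself and get the inequality from the per-token bound $\epsilon_i\le Sc(C_{j_i})$. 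This gives a self-contained argument for the actual HAE rule, with no appeal to the unproved ``no urgency'' step and no static-score assumption. The cost is that, with $\epsilon_i$ measured at the flush, you get equality for DDES. So nothing in the corollary shows DDES beating greedy; that comparison remains a remark in your write-up, just as it is heuristic in the paper.

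Do tidy up the ``main obstacle'' paragraph, because it mixes two models. Your Step~1 defines $\epsilon_i$ as the mass accumulated up to the flush, including what the token collects while in the bin. Under Definition~\ref{decoding:ep} the evicted set is chosen at the flush, so Step~1 is simply an identity and the monotonicity discussion is unnecessary. If you instead model the one-at-a-time marking of Figure~\ref{figur1:method}, Step~2 fails: a token marked early can gain attention in the bin and leave the flush-time lowest-$d$ set. With flush-time losses the inequality itself can then fail. For example, take scores $1,2,3$ with $d=2$; mark the first, let it rise to $6$, then mark the second, giving a loss of $8$ against a bound of $5$. In that model you must measure $\epsilon_i$ at marking time and replace the exchange argument by a rank comparison. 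When the $i$-th token is marked, at least one of the $i$ lowest flush-time elements of $S_1$ is still unmarked. Minimality and monotonicity then give that $\epsilon_i$ is at most the $i$-th smallest flush-time score, and summing yields the corollary.
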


Theorem~\ref{thm:1} establishes the relationship between hyperparameters and error during the eviction process. It demonstrates that the HAE method can maintain the integrity of information in the Decoding stage within the allowable error range during the pre-filling stage. Guided by the theoretical insights from the error analysis of H2O~\cite{NEURIPS2023_6ceefa7b} greedy eviction in the decoding eviction stage, we derive Corollary~\ref{cor1}, which states that the error upper bound of dynamic decoding eviction can be smaller than that of greedy eviction. These theoretical results effectively support the effectiveness of the HAE framework.
The proof of Theorem~\ref{thm:1} and Corollary~\ref{cor1} can be found in the Appendix~\ref{Append:theoretical}.

\section{Related Work}
\label{related Work}
\textbf{Mulitmodal Large Language Models.} 
Based on the success of Large Language Models (LLMs) such as LLaMA3~\cite{grattafiori2024llama3herdmodels}, Phi3.5~\cite{abdin2024phi3technicalreporthighly}, and Qwen~\cite{qwen} by only receiving text tokens, recent progress has been made in the area of Multimodal Large Language Models (MLLMs) by receiving vision and text tokens. For example, LLaVA~\cite{liu2023llava}, and Phi3.5-Vison-Instruct~\cite{microsoft2023phi}.    

\textbf{Efficient Inference of LLMs}. The KV-Cache technology in LLMs is often used to enhance model inference speed~\cite{zhou2024surveyefficientinferencelarge}. However, in environments with limited computational resources, a large amount of KV storage may cause model inference crashes and affect inference speed due to bandwidth limitations~\cite{shah2024flashattention3fastaccurateattention, Dynamic-LLaVA}. Therefore, 
Dynamic-LLava~\cite{Dynamic-LLaVA}, a trainable Sparsification method, have been proposed. These methods necessitate targeted training for adaptation to different tasks. 
Some methods~\cite{NEURIPS2023_6ceefa7b, chen2024nacl} for managing KV-Cache eviction have been proposed to achieve a balance between model inference speed and performance. H2O~\cite{NEURIPS2023_6ceefa7b} maintained a balance of the KV pairs corresponding to the most recent and frequently occurring tokens in pure text sequences, which had higher accumulated attention scores. 
NACL~\cite{chen2024nacl} optimized the previous strategy of evicting one token at a time during decoding by enabling the eviction of multiple tokens in a single step.
However, LLM-based KV optimization methods are not directly applicable to MLLMs because they do not account for distributional differences between modalities.

\textbf{Efficient Inference of MLLMs}. To overcome the limitations of KV-Cache storage, previous efforts~\cite{zhang2024sparsevlm,liu2024multistagevisiontokendropping,li2024llavaonevisioneasyvisualtask} have employed the compression of vision token for MLLMs. Vision tokens are spatial continuous and sparse in semantics compared to dense languages. 
SparseVLM~\cite{zhang2024sparsevlm} and MustDrop~\cite{liu2024multistagevisiontokendropping} dynamically discard low-contribution areas (such as redundant background information) by calculating token attention scores, thereby improving the model's inference efficiency. LLaVA-OneVision~\cite{li2024llavaonevisioneasyvisualtask} and Dynamic-LLaVA~\cite{huang2025dynamicllava} can dynamically determine the number of output visual tokens during single image/multiple images/video processing, but these methods require additional training scale.

Our method is a train-free approach that implements different layered expulsions during both the pre-filling stage and the generation stage.

\section{Experiments}
We aim to provide experimental evidence for three key research questions: \textbf{1}. What advantages in performance and task generalization does HAE have over other eviction methods?
\textbf{2}. What is the inference time performance of our proposed KV management method, HAE?
\textbf{3}. What is the rationale behind HAE that enables it to achieve better results? 

\subsection{Setup}
\textbf{Models and Tasks} Our experiments are based on three MLLMs, including the Phi3.5-Vision-Instruct, LLaVA-1.5-7B, and Video-LLaVA~\cite{damonlpsg2023videollama}.
We validate the HAE method using the LLaVA-1.5-7B model across a diverse set of multimodal benchmarks. This assessment covers various tasks related to multimodal understanding, i.e., image-based challenges, to evaluate the effectiveness of our approach. We conduct experiments on seven widely adopted benchmarks including GQA, MMB, MME, VizWiz, VQA2, TextVQA, and ScienceQA(SQA), all of which are derived from the assessment data of the LLaVA~\cite{liu2023llava} model.  
This part of the experiment used a 4090 GPU 24G. 
The settings of hardware and hyperparameter can be found in the Appendix~\ref{Append:Hardware}.
More baseline method comparisons can be found in the Appendix~\ref{Compared:More}.
\begin{table}[h]
\centering
\small
\caption{Evaluation of Eviction Strategies on Multimodal Understanding Tasks. The best results are highlighted in bold.}
\begin{tabular}{lccccccc}
\toprule 
Method & GQA & MMB & MME & VizWiz & SQA & VQA2 & TextVQA  \\ 
\midrule
LLaVA-1.5-7B (Full cache, 576) & 61.9 & 64.2 & 1862 & 50.0 & 69.5 & 78.5 & 58.2 \\
\midrule
ToMe (Retain, 192)  & 54.3 & 60.5 & 1563 & - & 65.2 & 68.0 & 52.1\\ 
FastV (Retain, 192)  & 52.7 & 61.2 & 1612 & - & 67.3 & 67.1 & 52.5\\ 
SparseVLM (Retain, 192)  & 57.6 & 62.5 & 1721 & 50.5 & 69.1 & 75.6 & 56.1\\
MustDrop (Retain, 192)  & 58.2 & 62.3 & \textbf{1787} & \textbf{51.4} & 69.2 & 76.0 & 56.5 \\
\midrule
HAE-LLaVA (Ours, Retain, 192) & \textbf{61.7} & \textbf{64.6} & 1587 & 50.3 & \textbf{69.4} & \textbf{78.1} & \textbf{57.9} \\
\toprule 
\end{tabular}
\small
\label{main:result}
\vspace{-0.5cm}
\end{table}

\begin{table}[h]
\centering
\small
\caption {Results of Exclusion Strategies in Multi-Image Story Generation Tasks. $s$ represents the time unit in seconds. The best results are
highlighted in bold.}
\begin{tabular}{lcccc}
\toprule 
Method & Style & Engaging & Coherence & Speed($s$)  \\ 
\midrule
Phi3.5-Vision-Instruct (Full Cache) & 6.65 & 6.60 & 1.82 & 7.40  \\
\midrule
H2O-Phi3.5-Vision-Instruct & 5.09 & 5.55 & 2.36 & 7.04 \\ 
MustDrop-Phi3.5-Vision-Instruct & 5.49 & 5.65 & 2.40 & 6.77\\
\midrule
HAE-Phi3.5-Vision-Instruct (Ours) & \textbf{5.67} & \textbf{6.08} & \textbf{2.59} & \textbf{4.96} \\
\toprule 
\end{tabular}
\small
\label{main:storyGeneration}
\vspace{-0.1cm}
\end{table}

To demonstrate the advantages of long inference time,
we also conducted experiments on image-based long story generation tasks, i.e. Seed-Story~\cite{yang2024seedstory}, using the Phi3.5-Vision-Instruct base model. We conducted an experimental evaluation using one RTX4090 24GB GPU. Please refer to the Appendix~\ref{Append:Hardware} for hardware configuration information such as CPU and memory. The Seed-Story dataset was constructed aimed at advancing multimodal story generation, which is designed to support the creation of long story sequence. We selected the Rabbids theme from this dataset for our experiments, which includes 332 items, with each item containing 30 images, and each image corresponding to approximately 40-60 English words. 
For specific details about the dataset and score evaluations, please refer to the Appendix~\ref{Append:storystream}.

\textbf{Baselines} To ensure a fair comparison, we consider ToMe~\cite{bolya2022tome}, SparseVLM~\cite{zhang2024sparsevlm}, FastV~\cite{chen2024image}, and MustDrop~\cite{liu2024multistagevisiontokendropping} as the baseline method for understanding tasks, and all these methods are designed and tested based on the same base model, i.e., LLaVA-1.5-7B.
For image-based long text generation tasks, we use the pure text inference acceleration method, H2O~\cite{NEURIPS2023_6ceefa7b} and the MustDrop~\cite{liu2024multi} method, which focuses more on visual token-driven approaches, as baseline methods. 
The baseline methods and our method both use a preset KV-Cache size of 1024 for this task, with a maximum generated text length set to $512$.
In the ablation experiments, to more thoroughly compare KV eviction methods on pure language models, we also added new baselines, AdaKV~\cite{DBLP:journals/corr/abs-2407-11550} and SnapKV~\cite{li2024snapkvllmknowslooking}.
For a detailed introduction to the baseline methods, please consult the Appendix~\ref{Append:baseline}.

\subsection{Main Results}
\label{Result-Any}
\textbf{Multimodal Understanding} 
We conducted a comprehensive evaluation of the HAE method using multimodal question answering and understanding tasks, comparing it with existing mainstream caching management strategies. 
All experiments were performed under the same hardware conditions, retaining 192 visual tokens for a fair comparison. The results are shown in Table~\ref{main:result}. LLaVA-1.5-7B has 576 visual KV pairs in the full cache scenario.
The experimental results indicate that the HAE method performs exceptionally well in multimodal question answering and understanding tasks, approaching the performance of the complete model LLaVA-1.5-7B. 
Table~\ref{tab:comparison} shows the performance comparison on video understanding tasks, i.e., TGIF~\cite{2017TGIF}, MSVD~\cite{2017Video}, and MSRVT~\cite{2017Video}. The experimental results indicate that our method achieves the comparable results with the state of the art (SOTA) method, i.e., MustDrop. The results from Table~\ref{main:result} and Table~\ref{tab:comparison} demonstrate that the HAE method can still achieve inference performance close to the original model after removing unimportant tokens, and even surpasses the original model in some tasks.

\textbf{Image-based Text Generation} 
The experimental results in Table~\ref{main:storyGeneration} indicate that in the image-based story generation task, the HAE method demonstrates advantages in both generation quality and inference speed compared to the baseline models: the H2O method, which focuses on text exclusion, and the MustDrop method, which emphasizes visual tokens exclusion. 
Compared to the Phi3.5-Vision-Instruct with full cache, HAE achieved scores of 5.67 and 6.08 on the style adaptation (Style) and engagement (Engaging) metrics, respectively,  approaching the complete model's scores of 6.65 and 6.60, and significantly outperforming H2O (5.09/5.55) and MustDrop (5.49/5.65). 
HAE achieves the best overall generation quality balance.
In terms of efficiency, HAE's inference speed reaches 4.96 seconds per sample, representing a $33\%$ improvement over the Phi3.5-Vision-Instruct with full cache (7.4 seconds), and significantly outperforms both H2O (7.04 seconds) and MustDrop (6.77 seconds) 
Appendix~\ref{Case:Anylsis} provides examples generated by different methods.
This validates that HAE's hierarchical exclusion strategy effectively reduces KV-Cache redundancy while retaining cross-modal associative features.

Table~\ref{main:result} and Table~\ref{main:storyGeneration} demonstrate that our proposed HAE eviction method has certain advantages over other eviction methods in terms of performance and task generalization. Furthermore, in the image-based story generation task (see Table~\ref{main:storyGeneration}), the HAE method shows a significant advantage in inference time for generating longer texts based on image understanding compared to other methods.

\begin{table}[t]
\centering
\small
\caption{Ablation studies on specific strategies are conducted during both the pre-fill and decoding stages. Detailed metrics include the average number of tokens per sample, storage (KV cache), performance (accuracy), and time (average inference time per sample). '*' indicates the average number of tokens retained across all samples.}
\label{tab:ablation}
\begin{tabular}{lcccc}
\toprule
\textbf{Method} & \textbf{Tokens} & \textbf{MMMU Acc.} & \textbf{KV Cache (MB)}  & \textbf{Time Sec.} \\
\midrule
Phi3.5-Vision-Instruct & $2357*$ & 43.0 & 9428 & 0.58 \\
\midrule
MustDrop (Phi3.5) &  ${916}^{*}$ & 39.9 & 3664 & 0.56 \\
H2O (Phi3.5)& $956~~$ & 42.8 & 3824  & 0.63 \\
SnapKV (Phi3.5) & $956~~$ & 42.1 & 3824  & 0.59 \\
AdaKV (Phi3.5) & $956~~$ & 42.6 & 3824  & 0.57 \\
\midrule
HAE-Phi3.5 (Pre-filling) &  ${1124}^{*}$& 42.4& 4496  & 0.21 \\
HAE-Phi3.5 (Decoding) &  $956(56)$ & 42.9 & 3824(224) & 0.49 \\
\midrule
HAE (All Stage) & $956(56)$ & 42.3&  3824(224) & 0.36 \\
\bottomrule
\end{tabular}
\vspace{-0.5cm}
\end{table}

\begin{table}[t]
\centering
\small
\caption{Performance comparison across different methods on various benchmarks. The results include accuracy and score metrics on TGIF, MSVD, and MSRVT datasets. LLM size is $7$B.}
\label{tab:comparison}
\begin{tabular}{lcccccccc}
\toprule
\textbf{Methods}  & \multicolumn{2}{c}{\textbf{TGIF}} & \multicolumn{2}{c}{\textbf{MSVD}} & \multicolumn{2}{c}{\textbf{MSRVT}} & \multicolumn{2}{c}{\textbf{Avg.}} \\
\cmidrule(lr){2-3} \cmidrule(lr){4-5} \cmidrule(lr){6-7} \cmidrule(lr){8-9}
 &\textbf{Accuracy} & \textbf{Score} & \textbf{Accuracy} & \textbf{Score} & \textbf{Accuracy} & \textbf{Score} & \textbf{Accuracy} & \textbf{Score} \\
\midrule
VideoChat \ & 34.4 & 2.3 & 56.3 & 2.8 & 45.0 & 2.5 & 45.1 & 2.5 \\
LLaMA-Adapter \ & - & - & 54.9 & 3.1 & 43.8 & 2.7 & - & - \\
Video-LLaMA \  & - & - & 51.6 & 2.5 & 29.6 & 1.8 & - & - \\
Video-ChatGPT \  & 51.4 & 3.0 & 64.9 & 3.3 & 49.3 & 2.8 & 55.2 & 3.0 \\
\midrule
Video-LLaVA \  & 47.0 & 3.4 & 70.2 & 3.9 & 57.3 & 3.5 & 58.2 & 3.6 \\
SparseVLM \  & 45.7 & 3.2 & 68.2 & 3.9 & 56.0 & 3.5 & 56.6 & 3.6 \\
FastV \  & 45.2 & 3.1 & 71.0 & 3.9 & 55.0 & 3.5 & 57.1 & 3.5 \\
MustDrop  & 46.2 & 3.3 & \textbf{71.5} & 3.9 & 56.5 & 3.6 & \textbf{58.1} & 3.6 \\
HAE (ours) & \textbf{46.8} & 3.3 & 69.7 & 3.9 & \textbf{57.0} & 3.6 & 57.8 & 3.6\\
\bottomrule 
\end{tabular}
\vspace{-0.5cm}
\end{table}

\subsection{Ablation Study}
We conducted an ablation experiment on MMMU~\cite{yue2024mmmumassivemultidisciplinemultimodal} to examine the benefits and impacts of specific eviction strategies during the pre-filling and decoding stages. 
Table~\ref{tab:ablation} reveals the following insights:
(1) The H2O method exhibits performance nearly on par with the original model for this task. However, its inference time is longer. This indicates that H2O is not ideal for accelerating multimodal understanding tasks (MMMU), which typically involve generating shorter texts. The time cost from cumulative attention calculations in H2O is not justified in the context of brief text generation typical of MMMU.
(2) Although the MustDrop method significantly speeds up inference for MMMU tasks, it focuses solely on removing tokens without considering the differing distributions of cumulative attention scores between visual and text tokens, resulting in a noticeable drop in inference performance.
(3) Separate evaluations of token eviction during the pre-filling and decoding stages showed that both achieved inference accuracy similar to the baseline model, with a more substantial improvement in inference speed during the pre-filling stage. The HAE method, which implements eviction in both stages, maintains performance while enhancing inference speed.

In Table~\ref{tab:ablation}, restricting eviction to the prefilling phase (HAE-Phi3.5 (Pre-filling)) reduces overall inference time to 0.21s, confirming that DAP significantly cuts initial KV-cache load and speeds up prefilling. Using only the decoding-phase policy (HAE-Phi3.5 (Decoding)) yields 0.49s, showing that DDES’s periodic batch eviction (via the recycle bin) outperforms greedy strategies like H2O (0.63s). The full HAE (All Stage) finishes in 0.36s, combining both stages’ advantages. It is faster than H2O and MustDrop, demonstrating the synergy of our hierarchical design. Despite improving upon H2O, methods like SnapKV and AdaKV, which excel in long-text generation, still demonstrate lower inference efficiency than our HAE on multimodal understanding tasks. 

H2O’s inference time can exceed that of the full model in some cases. This stems from its core design: at every decoding step, H2O must compute and sort cumulative attention scores for all cached tokens to decide evictions. Although methods AdaKV and SnapKV have improved upon the H2O approach, they still suffer from the aforementioned issues in multimodal understanding tasks.
With a large initial KV cache (due to visual and textual tokens), this frequent sorting accumulates substantial overhead. In short-generation tasks (e.g., MMMU), the cost can outweigh the benefits of cache reduction.
In contrast, HAE’s hierarchical design reduces computation frequency in two ways: (a) DAP identifies redundant visual tokens only in the first layer and broadcasts eviction indices to all subsequent layers, avoiding per-layer recomputation.
(b) DDES introduces a recycle bin that converts high-frequency single-token evictions into low-frequency batch evictions, amortizing cost over multiple steps. Since new tokens still attend to those in the recycle bin, HAE achieves faster inference than H2O while maintaining superior performance.

\subsection{Pruning Visual Token  Analysis}
During the model's pre-filling stage, the HAE method reuses the eviction results of visual tokens from the first layer for subsequent layers. While this strategy may seem too aggressive and potentially impact the inference of later layers, we conducted an ablation experiment to analyze whether the positions of the tokens removed in the first layer correspond to those that need to be evicted in other layers. The results are presented in Figure~\ref{fig:coverage-layer}, which is shown in Appendix.


We conducted three sets of experiments on the hyperparameter $r$, set to $0.001$,$0.0012$, $0.0015$ and $0.002$, respectively. This test also helped us select the best threshold hyperparameter, which is $0.0015$. The $\alpha$ is uniformly set to $0.0005$. The results in Figure 4 show the proportion of tokens evicted in the first layer of the Phi3.5-Vision-Instruction model that are also identified as evicted tokens in other layers. Notably, when the global sparse eviction threshold is set to 0.0015, the proportion reaches its highest point, with an average rate of $90.43\%$. 
The proportions for the other two threshold values are also greater than $80\%$ in the other layers. This result indicates that the token eviction positions in the first layer can be broadcast to other layers, enhancing the overall inference speed of the model. 

\section{Conclusion and Future Work}
In this paper, we proposed a novel Hierarchical Adaptive Eviction (HAE) framework for efficient KV cache management in MLLMs. The framework reduced memory usage through the development of the Dual-Attention KV Pruning (DAP) technique, maintaining performance levels, and introduced the Dynamic Decoding Eviction Strategy (DDES) to enhance the inference efficiency of MLLMs in generating long texts.
Theoretically, HAE guarantees cache information integrity with bounded error propagation. Empirically, it achieves $41\%$ KV-cache reduction while maintaining near-original model quality ($0.3\%$ average accuracy drop), outperforming other methods by 1.5× in inference speed. These results establish a new paradigm for efficient multimodal generation without compromising contextual fidelity.


We identify three promising research paths:
\textit{Trainable Eviction}: Developing end-to-end learnable sparsification mechanisms that optimize both attention and KV cache management.
\textit{Layer-wise Sparsity Analysis}: Formally analyzing the relationship between sparse attention patterns across transformer layers. This could provide a solid mathematical foundation for our approach and potentially lead to further optimizations.
\textit{Large-scale Evaluation}: Conduct comprehensive experiments on extremely large models and datasets to assess the scalability and efficiency of our method in more demanding scenarios.

\section*{Statement}
\textbf{Ethics statement.} 
This paper proposes a new Key-Value (KV) cache management framework in vision language models. It cuts KV cache memory with minimal accuracy loss and speeds up inference. We do not identify any potential negative concerns.

\textbf{Reproducibility statement.} This paper provides all necessary technique details for reproducibility, including theoretical analysis, algorithm details, experimental settings, and source code of the proposed techniques.

\bibliography{iclr2026_conference}
\bibliographystyle{iclr2026_conference}

\newpage
\appendix
\section{Appendix}
This appendix presents a comprehensive overview of the theoretical proofs, along with reproducible code implementations that include experimental configurations and visualization scripts. Additionally, it details the experimental parameter settings, such as hyperparameters and baseline model versions, and provides supplementary experiments. Together, these elements ensure the research's reproducibility and rigor.

\subsection{Observation}
\label{Appendix:plot-code}
Figure 3 of the paper shows the sparsity rate of the attention matrices at each layer. The sparsity rate of an attention matrix indicates the proportion of elements in the matrix that are 'close to zero' or 'negligible'. A higher sparsity rate indicates that a larger number of elements within the matrix can be considered negligible, enabling these elements to be omitted during calculations in order to enhance efficiency. The calculation of the sparsity rate here uses a threshold-based sparse computation method, with the following formula:
\begin{equation}
Sparsity~Rate = \frac{Number~of~elements~ A_{i,j} \leq \varepsilon}{Total~number of elements}
\end{equation}
where $A_{i,j}$ is an element of the attention matrix $A$, and $\varepsilon$ is the predefined threshold, which is set to $10^{-4}$ when creating Figure 3. 


\subsection{Theoretical Result}
\label{Append:theoretical}
This section mainly provides proofs and explanations for some theoretical results mentioned in the paper.

\textbf{Theorem 3.1} (Cache Information Integrity).
If the eviction threshold $k$ meets 
$$ k \leq \frac{log(\epsilon/Attn_{max})}{log(1-\lambda)},$$ then the total loss of the evicted tokens is $\sum_{j=1}^{c} \epsilon_j < \epsilon$, where $Attn_{max} = \mathop{min}\{\mathop{max}\limits_{j} A_{j,i}|j\in[1,2,...,c], i\in[1,2,...,|T|]\}$, $\lambda$ is the decay rate, and $\epsilon$ is the allowable loss error.

\textbf{Proof}: The proof of the theorem demonstrates that the eviction strategy during the pre-filling stage ensures the integrity of cached information through the definition of the decay model and worst-case analysis.

\textit{Decay Model}: Assume that the attention score of each token $C_j$ diminishes exponentially with time, with a decay rate of $\lambda$. At time step $t$, its score is given by:
\begin{equation*}
    S(C_j, t) = S_1(C_j) \cdot (1 - \lambda)^t,
\end{equation*}
where $S_1(C_j)$ is the initial score and $t$ is the number of time steps since being cached. 

\textit{Eviction Strategy and Loss Calculation}. (1) The first step of the eviction strategy adopts the Dual-Attention Pruning method corresponding to the \textbf{Definition 1} provided in this paper and removes $c$ visional tokens.\\
(2) The total loss is the sum of the scores of all evicted tokens. Let the maximum initial score be $Attn_{max}$, then the loss of a single token at time $t$ when evicted is:
\begin{equation*}
\epsilon_j(t) = Attn_{max} \cdot(1-\lambda)^{t}.
\end{equation*}

\textit{Worst-case Analysis}. Consider the worst case: every eviction operation swiftly eliminates high-scoring tokens, causing them to remain in the cache for an extended period until they are removed after $k$ evictions. At this stage, the loss for that token is given by:
\begin{equation*}
\epsilon_{max} = Atten_{max} \cdot (1-\lambda)^{k}.
\end{equation*}

\textit{Total Loss Constraint}. To ensure that the local loss of all evicted tokens does not surpass $\epsilon$, the following must be satisfied:
\begin{equation*}
   \epsilon \leq \epsilon_{max}  => \epsilon  \leq  Attn_{max}\cdot (1-\lambda)^{k} .
\end{equation*}

\textit{Solving for the eviction threshold $k$} taking the natural logarithm of the inequality:
\begin{equation*}
\log(\epsilon/Atten_{max}) \leq \log((1-\lambda)^{k}),
\end{equation*}
which simplifies to:
$$\log(\epsilon/Attn_{max}) \leq  k \cdot log(1-\lambda).$$
Since $\log(1-\lambda) < 0$, when dividing both sides by a negative number, we must reverse the inequality:
\begin{equation*}
k \leq \frac{\log(\epsilon/Attn_{max})}{\log(1-\lambda)}
\end{equation*}
~~~~~~~~~~~~~~~~~~~~~~~~~~~~~~~~~~~~~~~~~~~~~~~~~~~~~~~~~~~~~~~~~~~~~~~~~~~~~~~~~~~~~~~~~~~~~~~~~~~~~~~~~~~~~~~~~~~~~~~~~~~~~~~~~~~~~~~~~~~~~~~~~~~~~~~~~~~~~~~~~~~~~~~~~~~~~~~~~~~~~~~~~~
\textbf{Q.E.D}

\textbf{Discussion}. In practical scenarios, the total loss results from the combined impact of various evictions. If the interval between each eviction is $\Delta t$, the total loss can be modeled as the sum of a geometric serious:
$$\sum_{t=1}^{k} Attn_{max} \cdot (1-\lambda)^{k} \leq\epsilon.$$
Using the formula for the sum of a geometric series:
$$Attn_{max} \cdot \frac{(1-\lambda)(1-(1-\lambda)^{k})} {\lambda} \leq \epsilon, $$
the above inequality holds when $k$ satisfies the conditions of Theorem 3.1.

\textbf{Corollary 3.1} (Error Upper Bound)
Let the loss of attention score for each exclusion be $\epsilon_i$
 . Then the total loss satisfies
\begin{equation*}
\sum_{i=1}^{d} \epsilon_i \leq \sum_{j \in Low_d(S_1)} Sc(C_j),
\end{equation*}
where $Low_d(S_1)$ denotes the $d$ elements with the lowest scores.

\textbf{Proof}: If the eviction strategy strictly follows the greedy choice (i.e., removing the element with the lowest score currently each time), then the total loss is strictly equal to the sum of the scores of the lowest $d$ elements: 
\begin{equation}Sc(C_j)
 \sum_{i=1}^{d} \epsilon_{i} = \sum_{j\in Low_d(S_1)}Sc(C_j)
\end{equation}
At this point, the upper bound $\leq$ degenerates into an equality. If the $\epsilon_i$ in the Corollary is defined as the loss of HAE strategy, then since there is no urgency to evict tokens during the decoding process as there is with the greedy strategy, the eviction loss for each token corresponding to KV will be smaller. and thus 
\begin{equation*}
\sum_{i=1}^{d} \epsilon_{i} \leq \sum_{j\in Low_d(S_1)}Sc(C_j).
\end{equation*}

Further strict proof. By the definition of the greedy strategy, the elements chosen during each eviction operation are always the currently lowest scoring ones in the collection. Therefore, when evicting $d$ elements in a single step, the total loss is directly the sum of the scores of the globally lowest $d$ elements, which is $\sum_{j \in \text{Low}_d(S_1)} Sc(C_j)$.

During stepwise eviction (for example, evicting one element at a time for a total of $d$ times), we prove this through mathematical induction:
\begin{itemize}
    \item Base Case: In the first step, the lowest scoring element $C_{1}$ is evicted, resulting in a loss of $\epsilon_1 = Sc(C_{1})$.
    \item Inductive Hypothesis: The loss for the first $k$ steps ($1 < k \leq d$) is $$\sum_{i=1}^k \epsilon_i \leq \sum_{j \in \text{Low}_k(S_1)} S(C_j).$$
    \item Inductive Step: In the $k+1$ step, we evict the new lowest scoring element $C_{k+1}$ from the remaining collection, thus:
$$   \sum_{i=1}^{k+1} \epsilon_i \leq \sum_{j \in \text{Low}_k(S)} S(C_j) + S(C_{k+1}) = \sum_{j \in \text{Low}_{k+1}(S_1)} S(C_j).
  $$
\end{itemize}
Therefore, the total loss is strictly equal to the sum of the scores of the $d$ least scoring elements that have been eliminated.

~~~~~~~~~~~~~~~~~~~~~~~~~~~~~~~~~~~~~~~~~~~~~~~~~~~~~~~~~~~~~~~~~~~~~~~~~~~~~~~~~~~~~~~~~~~~~~~~~~~~~~~~~~~~~~~~~~~~~~~~~~~~~~~~~~~~~~~~~~~~~~~~~~~~~~~~~~~~~~~~~~~~~~~~~~~~~~~~~~~~~~~~~~
\textbf{Q.E.D}
\subsection{Experiment}
\label{Append:experiment}
This section mainly provides detailed experimental settings, some experimental evaluation methods, inference examples, and significance tests, among other things.
\subsubsection{Baseline Models}
\label{Append:baseline}
Below is a basic introduction to the baseline models mentioned in the paper, which can help understand the main objectives of the HAE method and its effectiveness. The baseline models involved include ToMe, FastV, Sparse VLM, MustDrop, and H2O.
\begin{itemize}
\item\textbf{ToME}~\cite{bolya2022tome}: Token Merging (ToMe) enhances ViT models’ throughput without retraining by using a lightweight matching algorithm to merge similar tokens. It works during both inference (doubling throughput with minimal accuracy loss in images, video, and audio) and training (improving training speed, like MAE fine-tuning on video).
\item\textbf{FastV}~\cite{chen2024image}: A plug-and-play method that learns adaptive attention patterns in early layers and prunes visual tokens in later layers.
\item\textbf{SparseVLM}~\cite{zhang2024sparsevlm}: It is a text-guided, training-free mechanism, and uses self-attention matrices of relevant text tokens to evaluate visual token significance, prunes them to maximize sparsity, and employs a rank-based strategy for layer-wise sparsification ratios alongside a token recycling method to compress pruned tokens.

\item\textbf{MustDrop}~\cite{liu2024multistagevisiontokendropping}: It evaluates visional token importance across three stages: in vision encoding, it merges similar spatial tokens and retains critical ones; in prefilling, it uses dual-attention filtering guided by text semantics; and in decoding, an output-aware cache policy reduces KV cache size. 
\item\textbf{H2O}~\cite{NEURIPS2023_6ceefa7b}: The Heavy Hitter Oracle (H2O) reduces KV cache memory by dynamically retaining a balance of recent tokens and Heavy Hitters (H2) — tokens contributing most to attention scores due to frequent co-occurrence.
\end{itemize}

\subsubsection{Hardware Configuration and Hyperparameter Settings}
\label{Append:Hardware}
\textbf{Hardware Configuration}. This hardware configuration includes an NVIDIA 3090 GPU with 24 GB of VRAM, utilizing driver version 535.183.06, and supports a maximum CUDA version of 12.2. It is powered by an Intel Xeon E5-2682 v4 CPU, featuring 16 cores and 31.4 GB of RAM. The system is equipped with a 20 GB hard disk for storage.

This hardware setup incorporates an NVIDIA 4090 GPU with 24 GB of VRAM, running driver version 535.216.03 and supporting up to CUDA version 12.2. It features an AMD EPYC 7J13 processor with 64 cores and 62.9 GB of memory. The storage configuration includes a 20 GB system disk.

\textbf{Hyperparameter Settings}. In this paper, we conduct several experiments to evaluate the performance of our model under different hyperparameter settings. The following table summarizes the hyperparameters used in each experiment.

\begin{table}[h]
\centering
\small
\caption{Hyperparameter Settings for Different Experiments. '{RC\_{size}}' denotes the size of recycling bin. 'Location' indicates the experimental table corresponding to our methods. '-' indicates that the hyperparameter is not involved.}
\label{tab:hyperparameters}
\begin{tabular}{lcccccc}
\toprule
\textbf{Experiment} & $r$ & $\alpha$ & {n\_beams} & temperature & {RC\_{size}} & Location\\
\midrule
HAE-LLaVA  & 0.0012 & 0.001 & 1 & 0.0 & 64 & Table 1 \& Table 5\\
HAE-Phi3.5 & 0.0005 & 0.0005 & 5 & 0.7 & 128 & Table 2 \\
HAE-Phi3.5 (Pre-filling) & 0.0015 & 0.0015 & 5 & 0.7 & - & Table 3 \\
HAE-Phi3.5 (Decoding) & - & - & 5 & 0.7 & 56 & Table 3 \\
HAE-Phi3.5 (All-Stage) & 0.0015 & 0.0015 & 5 & 0.7 & 56 & Table 3 \\
\bottomrule
\end{tabular}
\end{table}

\subsubsection{Image-Based Multimodal Understanding}
\label{Compared:More}
To provide a more comprehensive comparison, in addition to the fair comparison of the Train-Free method presented in the paper, we also compare it with some of the latest State-Of-The-Art (SOTA) trainable methods. These experimental results are shown in Table~\ref{mainresult:more}.

\begin{table}[htbp]
\centering
\small
\caption{Evaluation of Eviction Strategies on Multimodal Understanding Tasks. The “Free” column indicates whether a method is training-free. The best results are highlighted in bold.}
\begin{tabular}{l|c|cccccc}
\toprule 
Method & Free & GQA & MMB & MME & SQA & VQA2 & TextVQA  \\ 
\midrule
LLaVA-1.5-7B (Full Cache, 576)& - & 61.9 & 64.2 & 1862 & 69.5 & 78.5 & 58.2 \\
\midrule
LLaVA-FastV (Retain, 144) &\Checkmark & 57.5 & 63.5 & 1459 & 68.7 & 75.1 &  56.2\\ 
LLaVA-HiRED (Retain, 115) &\Checkmark & - & - & - &  74.4 & 74.7 & 44.2\\ 
VoCo-LLaMA (Retain, 128) &\XSolidBrush  & 59.8 & 61.0 & - & - & 76.9 & - \\
Dynamic-LLaVA-7B (Retain, 115) &\XSolidBrush  & 61.4 &\textbf{65.4} & 1480  & \textbf{69.1} & \textbf{78.0} & 57.0 \\
\midrule
HAE-LLaVA-7B (Retain, 128) & \Checkmark & \textbf{61.5} & 64.0 & \textbf{1497} & 69.0 & 77.5 & \textbf{57.1} \\
\toprule 
\end{tabular}
\small
\label{mainresult:more}
\end{table}

These results demonstrate that the HAE method is a training-free approach that achieves the best performance among training-free methods. Specifically, HAE outperforms other training-free methods such as LLaVA-FastV~\cite{chen2023acceleratinglargelanguagemodel} and LLaVA-HIRED~\cite{DBLP:journals/corr/abs-2408-10945} across various vision understanding benchmarks. Notably, HAE shows competitive performance, even surpassing some trainable methods (i.e., VoCo-LLaMA~\cite{DBLP:journals/corr/abs-2406-12275} and  Dynamic-LLaVA-7B~\cite{huang2025dynamicllava}) in certain tasks. For instance, HAE scores 61.5 on the GQA benchmark, which is higher than the trainable method Dynamic-LLaVA-7B's score of 61.4. Additionally, HAE scores 77.5 on the VQA2 benchmark, which is slightly lower than Dynamic-LLaVA-7B's score of 78.0 but still very close.

These results indicate that HAE is a promising method with significant potential. Its ability to achieve competitive or even superior performance compared to some trainable methods without the need for training is particularly noteworthy. Although there are benchmarks where HAE does not outperform Dynamic-LLaVA-7B~\cite{huang2025dynamicllava}, such as MMB (64.0 vs. 65.4) and TextVQA (57.1 vs. 57.0), the performance gap is minimal. This suggests that HAE could serve as a strong foundation for developing trainable sparse context sparsification methods in the future. Further research could explore how to build on HAE's strengths to create methods that combine the efficiency of training-free approaches with the adaptability of trainable models.

\subsubsection{Story Generation Dataset}
\label{Append:storystream}
\textbf{Dataset Introduction}: SEED-Story StoryStream~\cite{yang2024seedstory} is designed for generating captions corresponding to a series of animated images. It encompasses three themes: George, Rabbids, and The\_Land.
The StoryStream dataset is an innovative resource aimed at advancing multimodal story generation. Originating from popular cartoon series, it includes a comprehensive collection of detailed narratives and high-resolution images, designed to support the creation of long story sequences.

\textbf{Sample Description}: The evaluation focuses on the Rabbids theme within the SEED-story StoryStream dataset. This subset comprises 75 image sets, with the validation file (val.jsonl) containing 332 items. Each item consists of 30 images, accompanied by their respective captions. These captions are typically 40-60 English words in length, providing detailed descriptions for each image in the sequence. This structure allows for a comprehensive assessment of story generation capabilities across multiple connected images within the Rabbids cartoon context.

\textbf{Prompt for Story Generation}: To use the Phi3-Vision-Instruct model for image-based story generation tasks, we have designed the following prompt to facilitate the reproducibility of experimental results.
\lstset{
    breaklines = true
}
\begin{lstlisting}[language=Python, caption= Prompt design of the story generation task, label=lst:prompt]
# Task: Generate a story description for the uploaded image.
# Input: An image.
# Output: A story that meets the requirements.
placeholder = "<|image_1|>\n"
context = (
        "Please generate a story description for the uploaded image. The requirements are:\n"
        "- The description for the image should be consistent with the style of the 'Rabbids' cartoon, ensuring that the text and visuals are aligned in terms of style.\n"
        "- The description should be engaging and entertaining, with elements that captivate the audience and maintain their interest in the story.\n"
        "- The description should be closely related to the content of the image, ensuring that the text is coherent with the visuals and maintains logical consistency in the narrative.\n"
    )
prompt = f"Image: {placeholder}\nContent: {context}"
\end{lstlisting}

\textbf{Evaluation Metrics}: Three evaluation metrics are used, (1) Style Consistency, (2) Engagement Level, (3) Coherence. The AI judge assigns a score out of 10 to each metric based on precise guidelines for impartiality and structure. The evaluation criteria are outlined in the Listing~\ref{lst:multiturn}.

\begin{lstlisting}[language=Python, caption={Instruct design of different evaluation}, label=lst:multiturn]
Style_instruction = "Please act as an impartial judge and evaluate the quality of the generation story contents provided by an AI assistant. Your job is to give a score out of 10. Your evaluation should consider the style consistency of the story images. Do not allow the length of the responses to influence your evaluation. Be as objective as possible. After providing your explanation, output your final score by strictly following this format: \"[[score]]\", such as \"[[7]]\"."

Engage_instruction =  "Please act as an impartial judge and evaluate the quality of the generation story contents provided by an AI assistant. Your job is to give a score out of 10. Your evaluation should consider the engaging level of the story. Do not allow the length of the responses to influence your evaluation. Be as objective as possible. After providing your explanation, output your final score by strictly following this format: \"[[score]]\", such as \"[[7]]\"."

Coherence_instruction = "Please act as an impartial judge and evaluate the quality of the generation story contents provided by an AI assistant. Your job is to give a score out of 10. Your evaluation should consider the coherence of the generated story images and text. Do not allow the length of the responses to influence your evaluation. Be as objective as possible. After providing your explanation, output your final score by strictly following this format: \"[[score]]\", such as \"[[7]]\".

\end{lstlisting}

\textbf{Evaluation Process}: We used Phi3.5-Vision-Instruct to generate captions for groups of three images at a time. The model is tasked with generating story descriptions that match the style of the ‘Rabbids’ animated series, maintain high engagement, and ensure coherence with the image content. These generated captions are then compiled into a dataset of 332 items, each containing 30 images and their corresponding captions.

The scoring phase employs the Deepseek-R1 model~\cite{deepseekai2025deepseekr1incentivizingreasoningcapability} to assess the generated content based on three criteria: style consistency, engagement level, and coherence. While style and engagement are evaluated directly using predefined instructions, coherence is assessed by comparing the generated captions to the original ones from the validation file. The scores for each metric are averaged, culminating in a final evaluation score that reflects the overall quality of the generated story content. The score range is from 1 to 10.

\subsubsection{Case Demonstration}
\label{Case:Anylsis}

\begin{figure}[ht]
    \begin{center}
    \includegraphics[width=1.0\textwidth]{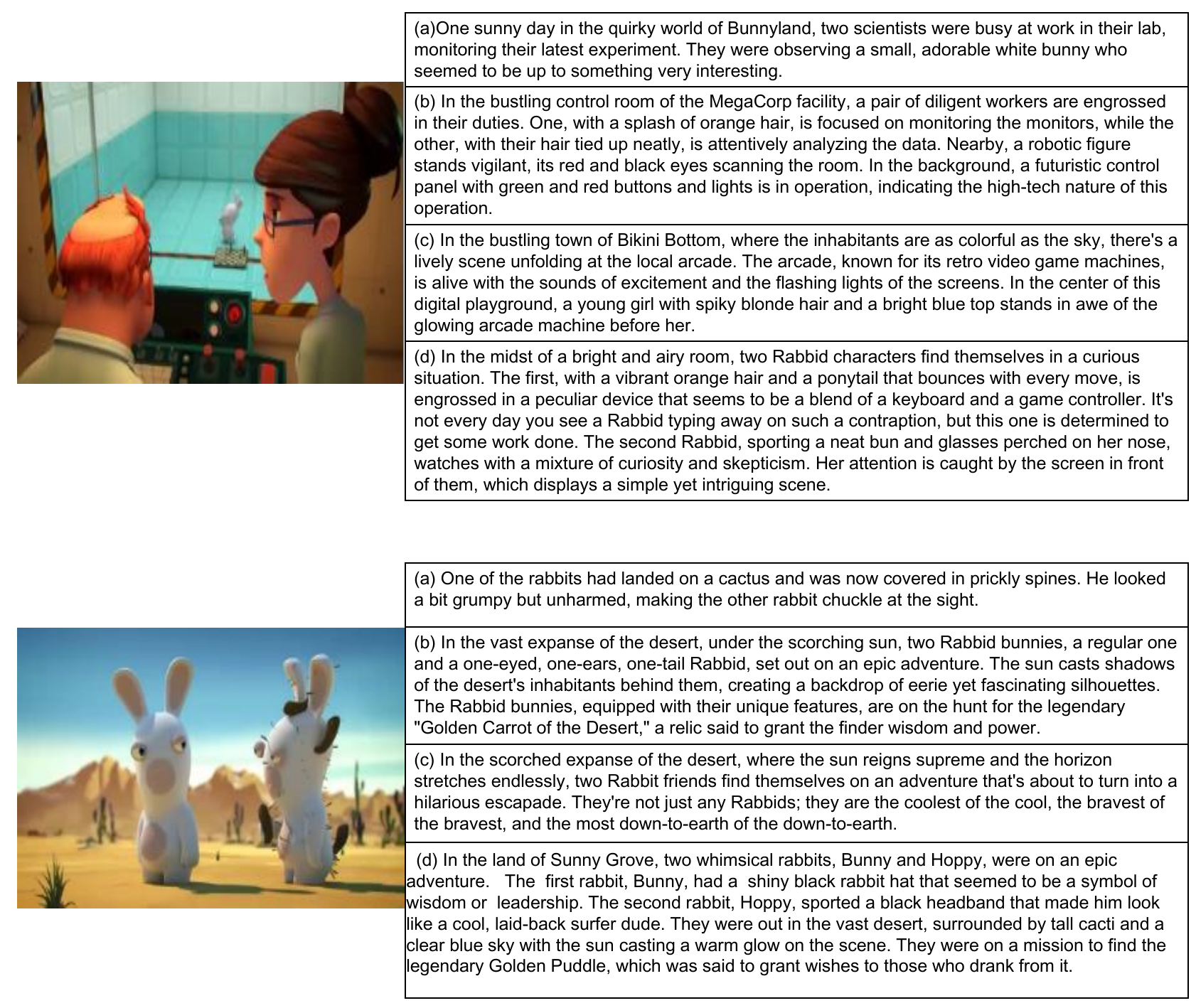}
    \caption{Comparison of image-based story generation results using different KV eviction acceleration methods based on Phi3.5-Vision-Instruct model. (a) Annotated story description; (b) Results generated using the H2O method; (c) Results generated using the MustDrop method; (d) Results generated using our proposed HAE method.}
    \label{fig:story_case_study}
    \end{center}
    \vspace{-0.3cm}
\end{figure}

\begin{figure}[ht]
    \begin{center}
    \includegraphics[width=0.90\textwidth]{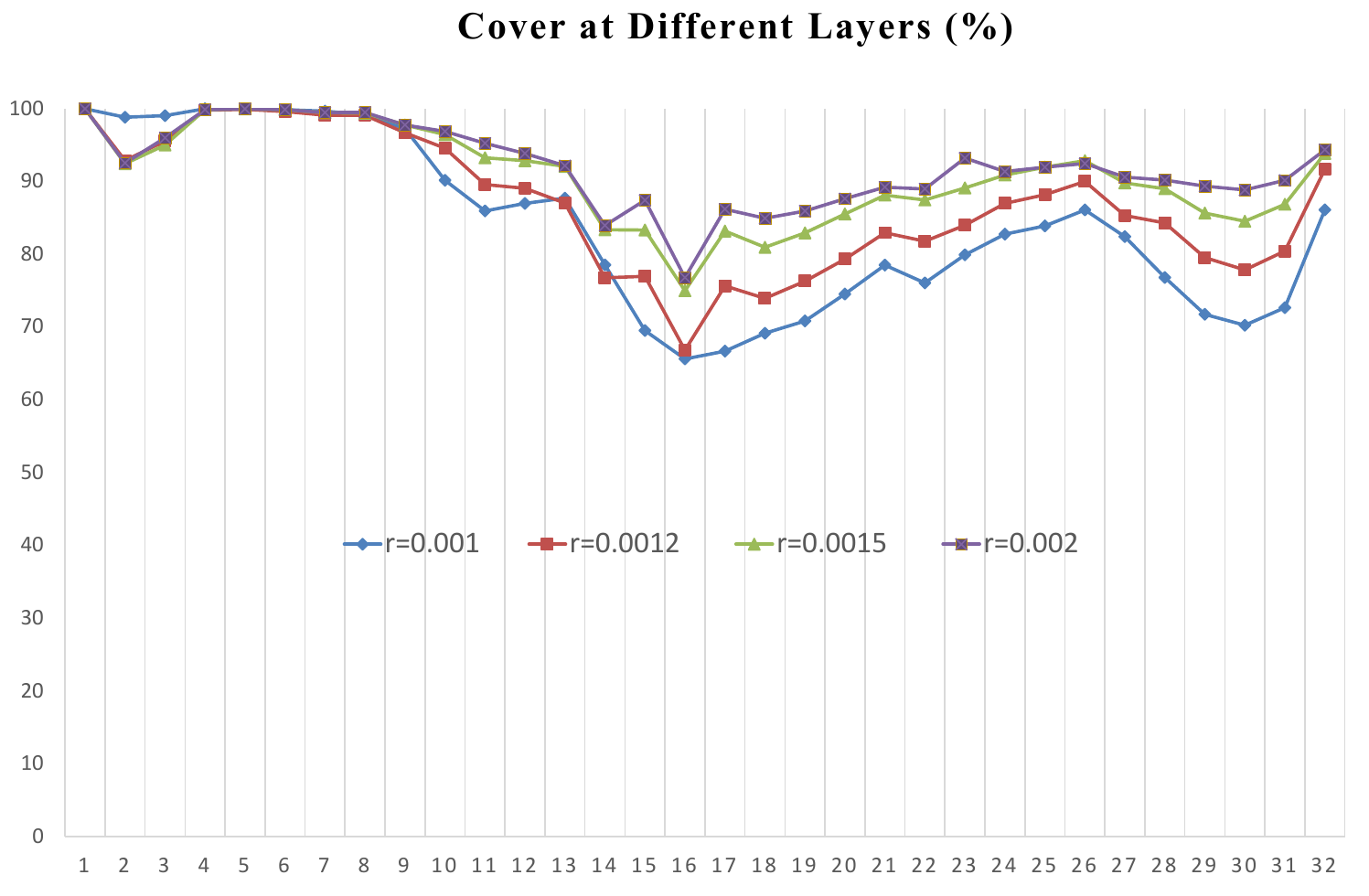}
    \caption{Comparison of cumulative attention score variance between visual tokens and text tokens in the Phi3.5-vision-Instruction.}
    \label{fig:coverage-layer}
    \end{center}
    \vspace{-0.3cm}
\end{figure}

Figure~\ref{fig:story_case_study} presents a comparative illustration of image-based story generation results using three distinct KV eviction acceleration methods, i.e., H2O, MustDrop and the proposed HAE, implemented on the Phi3.5-Vision-Instruct model. Each subfigure demonstrates a unique narrative generated from the same input image, highlighting methodological differences. 

HAE (d) generates narratives that are precise and tightly aligned with visual content. For instance, it correctly identifies "two Rabbit characters," a "keyboard and game controller," and a "screen," demonstrating strong visual grounding.
In contrast, H2O (b) introduces hallucinations like a "robotic figure" and "MegaCorp facility" not present in the image, showing a loss of visual fidelity.
Similarly, MustDrop (c) manifests semantic drift with descriptions like "Bikini Bottom" and "scales machine," which are incoherent with the actual scene.


\end{document}